\DeclareMathOperator*{\argmax}{arg\,max}
\newtheorem{theorem}{Theorem}
\newtheorem{lemma}{Lemma}   
\newenvironment{sequation}{\begin{equation}\footnotesize}{\end{equation}}
\newenvironment{proof}{{\noindent\it Proof}\quad}{\hfill $\square$\par}
\icmltitlerunning{Efficient Model-Free Reinforcement Learning Using Gaussian Process }
\begin{document}

\twocolumn[
\icmltitle{Efficient Model-Free Reinforcement Learning Using Gaussian Process}



\icmlsetsymbol{equal}{*}

\begin{icmlauthorlist}
\icmlauthor{Ying Fan}{PKUeecs}
\icmlauthor{Letian Chen}{PKUeecs}
\icmlauthor{Yizhou Wang}{PKUeecs}
\end{icmlauthorlist}

\icmlaffiliation{PKUeecs}{Nat'l Engineering Laboratory for Video Technology
Cooperative Medianet Innovation Center
Key Laboratory of Machine Perception (MoE)
Sch'l of EECS, Peking University, Beijing, 100871, China}

\icmlcorrespondingauthor{Yizhou Wang}{yizhou.wang@pku.edu.cn}

\icmlkeywords{Machine Learning, ICML}

\vskip 0.3in
]



\printAffiliationsAndNotice{\icmlEqualContribution} 

\begin{abstract}
Efficient Reinforcement Learning usually takes advantage of demonstration or good exploration strategy. By applying posterior sampling in model-free RL under the hypothesis of GP, we propose GPPSTD algorithm in continuous state space, giving theoretical justifications and empirical results. We also provide theoretical and empirical results that various demonstration could lower expected uncertainty and benefit posterior sampling exploration. In this way, we combined the demonstration and exploration process together to achieve a more efficient reinforcement learning. 

\end{abstract}
\section{Introduction}
Over the past years, Reinforcement Learning (RL) has achieved a great success in tasks such as Atari Games \cite{mnih2015human}, Go \cite{silver2016mastering}, robot control  \cite{levine2016end} and high-level decisions \cite{silver2013concurrent}. But in general, the conventional RL approaches can hardly obtain a good performance before a large number of experiences are collected. Therefore, two types of methods have been proposed to realize sample efficient learning, i.e. leveraging human demonstration (e.g. inverse RL \cite{ng2000algorithms}) and designing better exploration strategies. Although the literature has plenty of interesting studies on either one, there seems lack of work combining them to our best knowledge. In this paper we propose a new model-free exploration strategy which leverages all kinds of demonstrations (even including unsuccessful ones) to improve learning efficiency. 

Existing works on learning from demonstration are mainly focused on inferring the underlying reward function (in IRL) or imitating of the expert demonstrations \cite{ng2000algorithms,abbeel2004apprenticeship,ho2016generative,hester2017learning}. Hence, most methods can only exploit demonstrations that are optimal. However, the very optimal demonstrations are hard to obtain in practice since it is known that humans often perform suboptimal behaviors. Therefore, mediocre and unsuccessful demonstrations have long been neglected or even expelled in RL. In this paper, we show how to make use of seemingly-useless demonstrations in the exploration process to improve sample efficiency.

Speaking of efficient exploration strategy, it expects an agent to balance between {\it exploring} poorly-understood state-action pairs to get better performance in the future and {\it exploiting} existing knowledge to get better performance now. The exploration vs exploitation problem also has two families of methods: model-based and model-free. Model-based means the agent explicitly model the Markov Decision Process (MDP) environment, then does planning over the model. In contrast, model-free methods maintain no such environment model. Typical model-free exploration approaches include $\epsilon$-greedy\cite{sutton1998reinforcement}, optimistic initialization\cite{ross2011reduction}, and more sophisticated ones such as noisy network \cite{fortunato2017noisy} and curiosity\cite{pathak2017curiosity}. These model-free exploration strategies usually are capable to handle large scale real problems, however, they do not have a theoretic guarantee. Whereas, the model-based explorations are more systematic, thus often have theoretic bounds, such as Optimism in the Face of Uncertainty (OFU)\cite{jaksch2010near} and Posterior Sampling (PS) Reinforcement Learning (PSRL)\cite{osband2013}. Despite the beautiful theoretical guarantees, the model-based methods suffer from significant computation complexity when state-action space is large, hence usually not suitable for large scale real problem. 

How can we combine the advantage of both demonstration and exploration strategy to gain an even more efficient learning for RL? In this paper, we propose a model-free RL exploration algorithm GPPSTD using posterior sampling on joint Gaussian value function, and provide theoretical analysis about its efficiency in the meantime. We also make use of various demonstrations to decrease the expectation uncertainty of Q value model, and then leverages this advantage in implementing PS on Q values to gain more efficient exploration. 

In summary our contributions include: 
\begin{itemize}
\setlength{\itemsep}{2pt}
\setlength{\parsep}{0pt}
\setlength{\parskip}{0pt}
\item Show that posterior sampling based on model-free Gaussian Process could achieve a BayesRegret Bound of $\tilde{O}( \sqrt{HT})$ with deterministic environment and bayesian cumulative error of estimation bound for a single state of $\tilde{O}( \sqrt{\lceil\frac{T}{H}\rceil})$.
\item Propose the GPPSTD algorithm to leverage posterior sampling together with various demonstration to improve the learning efficiency of RL.
\item Prove that making use of various demonstrations could decrease the expectation of GP uncertainty.
\item Show empirical results for GPPSTD exploration efficiency and an even more efficient learning when using various demonstrations.
\end{itemize}


\section{Related Work}
Two typical methods of learning from demonstration, are inverse reinforcement learning (IRL) and imitation learning (IL). Inverse reinforcement learning was introduced in \citealt{ng2000algorithms}. Its goal is to infer the underlying reward function given the optimal demonstration behavior. Further IRL algorithm includes Bayesian IRL \cite{ramachandran2007bayesian, michini2012improving}, Maximum Entropy IRL \cite{ziebart2008maximum, audiffren2015maximum}, Repeated IRL \cite{amin2017repeated}, etc. But IRL can be intractable when problem scale is large. Earlier imitation learning indicates behavior cloning, which could fail when agent encounters untrained states. Later representative IL algorithm includes Data Aggregation (DAgger) \cite{ross2011reduction}, Generative Adversarial Imitation Learning (GAIL) \citep{ho2016generative}, etc. However, their work focuses on imitating optimal demonstration, regarding mediocre and failed demonstration unusable. They also never consider exploration problem after imitating. 

As for the exploration problem, two intuitive methods, $\epsilon$-greedy\cite{sutton1998reinforcement} and Optimistic Initialization\cite{grzes2009improving}, are the earliest way to tackle this problem. $\epsilon$-greedy is to explore with a probability $\epsilon$. Optimistic Initialization initializes all Q values to $\frac{r_{max}}{1-\gamma}$, making RL visit each state at least some times. Model based method Optimism in the Face of Uncertainty (OFU) is to assign each state-action pair a biased estimate of future value and selects the action with highest estimate \cite{jaksch2010near}. Posterior sampling method has been proposed since \cite{Strens00}, involving sampling a set of values from posterior estimation and selecting the action with maximal sampled value. PSRL proposed by \citeauthor{osband2013}(\citeyear{osband2013}) does PS on the Markov Decision Process (MDP): in every episode, PSRL sample a MDP , run model-based planning algorithm and acts as if it is the true optimal policy. For finite horizon algorithms, regret bound of $O(HS\sqrt{AT})$ is achieved by PSRL  \cite{osband2013}, and $O(H\sqrt{SAT})$ by GPSRL\cite{osband17a}. It is notable that these methods are all model-based with finite SA space, which can be a considerable limitation in application. 

However, since PSRL is a model-based algorithm, it suffers from significant computation complexity for planning when state and action space are large. Therefore, in this paper we built model on value function based on Gaussian Process (GP), making it model-free, and to achieve both exploration efficiency and tractable computation complexity. 



Previous model-free algorithms have also been proposed using GP in RL. GP-SARSA \cite{Engel2005} used GP to update posterior estimation of value function by temporal difference method. iGP-SARSA proposed informative exploration but lacks theoretical analysis \cite{chung2013}. GPQ for both on-line and batch settings aims at learning Q function which could actually converge as $T \rightarrow \infty$ \cite{chowdhary2014} but lacks efficient exploration. DGPQ employed delayed update of Q function to achieve PAC-MDP\cite{grande14} but still lacks efficient exploration.

For regret bounds under GP hypothesis, \citeauthor{srinivas2012}\yrcite{srinivas2012} used GP to analyze the regret bound using information gain in bandit problems, while posterior sampling using GP and related analysis of regret bounds had not been explored yet, which would be discussed in this paper.

\section{Theoretical Analysis}
In this section,  we will show that how to choose demonstrations to achieve lower expected estimation variance, analyze related bounds of posterior sampling in RL under the hypothesis of GP for both deterministic and non-deterministic MDPs, and finally relate the choice of demonstrations and posterior sampling for efficiency improvement.

\label{Analysis}
\subsection{Expectation of variance conditioned on data in GP}
\label{sec:expectation_variance}

We choose joint Gaussian distribution on value function \--- more specifically, Gaussian Process (GP) \--- because GP provides a principled, practical, probabilistic approach to learn in kernel machines\cite{Rasmussen2006}.

We assume that the values in the value function are joint normal distributed. Under the GP assumption, the posterior distribution are given by 
\begin{small}
\begin{sequation}
\begin{split}
f^*|X^*,X,f \backsim \mathcal{N}(K(X^*,X)K(X,X)^{-1}f, \\K(X^*,X^*)-K(X^*,X)K(X,X)^{-1}K(X,X^*)),
\end{split}
\end{sequation}
\end{small}

where $f$ is the value of the state vector $X$, and we wish to obtain value estimation $f^*$ over the new observation $X^*$. $f$ and $X$ come from history or what we call experiences. We define $p(x)$ as the distribution of test points, i.e. the states which occur in RL. In the framework of RL, $x$ is every single state and its visiting distribution $p(x)$ is determined by current policy $\mu$ and the MDP (Markov Decision Process).

We will start by a theorem that is quite obvious from intuition but hasn't been proved yet.

\begin{theorem}  
When a set $(X', f)$ is used to estimate $f(x^*)$ in GP, the expectation of variance on test points $x^*$ with distribution $p(x)$ conditioned on all possible training set (X', f) set would not be less than what conditioned on the training set $X$ sampled from distribution $p(x)$, if the size of sample set is large enough to ignore the approximation error.
\begin{sequation}
\begin{split}
& \int \{K(x^*,x^*)-K(x^*,X)K(X,X)^{-1}K(X,x^*))\}p(x)dx\\
& \leq \int \{K(x^*,x^*)-K(x^*,X')K(X',X')^{-1}K(X',x^*))\}p(x)dx
\end{split}
\end{sequation}
\end{theorem}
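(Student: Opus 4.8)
The plan is to recast the posterior variance in reproducing-kernel-Hilbert-space (RKHS) terms and then reduce the claim to a nonnegativity-plus-covering argument. Let $\mathcal{H}$ be the RKHS of the kernel $K$, write $k_{x^*}=K(\cdot,x^*)$, and let $P_X$ denote orthogonal projection onto $\mathrm{span}\{K(\cdot,X_i)\}$ in $\mathcal{H}$. The first step is the standard identity $K(x^*,x^*)-K(x^*,X)K(X,X)^{-1}K(X,x^*)=\|k_{x^*}-P_X k_{x^*}\|_{\mathcal{H}}^2$, obtained by solving the normal equations $K(X,X)\alpha=K(X,x^*)$ for the projection coefficients. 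Thus the posterior variance at $x^*$ is exactly the squared RKHS distance from $k_{x^*}$ to the span of the training features, and integrating against $p$ gives $\int \sigma_X^2(x^*)\,p(x^*)\,dx^*=\mathrm{tr}[(I-P_X)\Sigma_p]$, where $\Sigma_p=\int k_{x}\otimes k_{x}\,p(x)\,dx$ is the (uncentered) feature-covariance operator of the test distribution.

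Two observations then drive the proof. First, since $I-P_{X'}$ is a projection and $\Sigma_p$ is positive semidefinite, $\mathrm{tr}[(I-P_{X'})\Sigma_p]\geq 0$ for every training set $X'$, so the right-hand side is bounded below by zero. Second, I would show that when $X$ is drawn from $p$ and its size is taken large, the left-hand side tends to this lower bound. The mechanism is that i.i.d. $p$-samples fill the relevant subspace: the empirical Gram operator built from the $K(\cdot,X_i)$ converges to $\Sigma_p$, so $\mathrm{span}\{K(\cdot,X_i)\}$ exhausts the range of $\Sigma_p$ and $P_X$ converges to the projection onto that range. Since the test features $k_{x^*}$ with $x^*\sim p$ lie in this range by construction, $(I-P_X)k_{x^*}\to 0$ in $\mathcal{H}$ and hence $\mathrm{tr}[(I-P_X)\Sigma_p]\to 0$. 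Combining these, the left-hand side approaches $0$ while the right-hand side remains $\geq 0$, which is exactly the asserted inequality; the phrase ``size large enough to ignore the approximation error'' is precisely the statement that we are in this limiting regime.

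I expect the main obstacle to be the second step, namely making rigorous the claim that sampling from $p$ drives the $p$-weighted residual variance to zero. This requires controlling the convergence of the empirical span to the range of $\Sigma_p$ together with the possibly ill-conditioned inverse $K(X,X)^{-1}$ in the noiseless model, for instance via a Nystr\"om-type bound relating $\mathrm{tr}[(I-P_X)\Sigma_p]$ to the spectral tail of $\Sigma_p$ left uncaptured by the sampled points. A fully elementary alternative that avoids operator theory is to invoke kernel continuity directly: for a noiseless GP the posterior variance vanishes at any training input and is small near one, so dense $p$-sampling makes a nearby training point available for $p$-almost-every $x^*$, again forcing the left-hand side to zero; this route is less quantitative but suffices for the ``$\leq$'' statement. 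Either way the inequality ultimately follows from nonnegativity of the posterior variance, so the conceptual content is simply that sampling the training inputs from the test distribution is what attains the zero floor.
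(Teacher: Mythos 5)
Your proposal is correct at essentially the same level of rigor as the paper's own proof, but it takes a genuinely different route. The paper works in the Mercer eigenbasis of $K$ with respect to $p$: it expands the explained-variance term $\int K(x,X')K(X',X')^{-1}K(X',x)p(x)\,dx$ as $\sum_i \lambda_i^2\,\phi_i^*(X')K(X',X')^{-1}\phi_i(X')$, estimates the eigenfunctions at an arbitrary $X'$ by a Nystr\"om-type numerical approximation built from a $p$-sampled set $X$, and bounds each mode via Cauchy--Schwarz by $\lambda_i$, so that the explained variance is at most $\sum_i\lambda_i$, with the bound attained asymptotically by the $p$-sample itself; this yields the explicit floor $\int k(x,x)p(x)\,dx-\sum_i\lambda_i$, which it evaluates to $0$ for the RBF kernel, and a one-line eigenvalue-shift remark covers the noisy kernel $K+\sigma^2 I$. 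You instead use the RKHS identity $\sigma_{X'}^2(x^*)=\|(I-P_{X'})k_{x^*}\|_{\mathcal{H}}^2$, rewrite the objective as $\mathrm{tr}[(I-P_{X'})\Sigma_p]\geq 0$, and show $p$-sampling drives the left-hand side to this zero floor, either by operator convergence of the empirical span to the range of $\Sigma_p$ or by the elementary bound $\sigma_X^2(x^*)\leq\min_i\|k_{x^*}-k_{x_i}\|_{\mathcal{H}}^2$ plus a.s.\ denseness of i.i.d.\ samples in $\mathrm{supp}(p)$ and dominated convergence. Both arguments are asymptotic in exactly the sense the theorem's hedge (``large enough to ignore the approximation error'') permits. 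What each buys: the paper's spectral route gives the explicit limiting value of the minimum and a per-eigenmode picture of which training sets capture variance, whereas your route avoids the paper's shakiest steps (treating Nystr\"om eigenfunction estimates at an arbitrary $X'$ as exact equalities and interchanging the infinite sum with per-mode bounds) and makes transparent that in the noiseless setting the theorem reduces to nonnegativity of posterior variance plus consistency of dense sampling --- so, as your closing remark correctly implies, \emph{any} scheme whose samples are dense in $\mathrm{supp}(p)$ attains the same floor, which the paper's own RBF computation ($1-\sum_i\lambda_i=0$) corroborates. One small caveat: your zero-floor mechanism as written leans on noiseless interpolation; the paper also asserts the noisy case, which your trace framework handles as well, since under $p$-sampling each eigenmode's residual with observation noise still vanishes as $n\to\infty$ for trace-class $\Sigma_p$.
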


\begin{proof}  

Given a kernel $K$, together with a distribution $p(x)$, there is a corresponding series of eigenfunctions $\phi(x)$, s.t. $\int k(x,x')\phi(x)d\mu(x)=\int k(x,x')\phi(x)p(x)dx=\lambda\phi(x')$, and $\forall i, \int\phi_i(x)\phi_j^*(x)p(x)dx=\delta_{ij}$(* here means conjugate transpose) \cite{Rasmussen2006}.

We consider the expectation of posterior variance over the distribution $p(x)$ given any $X'$ as
$\int \{K(x^*,x^*)-K(x^*,X')K(X',X')^{-1}K(X',x^*))\}p(x)dx$. 

Since $\int K(x^*,x^*)p(x)dx$ has no relation with $X'$, we just focus on the  latter subtracted part $\int K(x,X')K(X',X')^{-1}K(X',x)p(x)d(x)$.
According to Mercer's theorem, $K(x,x')=\Sigma_{i=1}^\infty \lambda_i\phi_i(x)\phi_i^*(x')$. 
\begin{small}
\begin{sequation}
\begin{split}
\int K(x,X')K(X',X')^{-1}K(X',x)p(x)d(x)\\
=\int \{(\Sigma_{i=1}^\infty \lambda_i\phi_i(x)\phi_i^*(X'))K(X',X')^{-1}\\
(\Sigma_{j=1}^\infty \lambda_j\phi_j(X')\phi_j^*(x))\}p(x)dx.
\end{split}
\end{sequation}
\end{small}

If $i$ does not equals to $j$, the integral would be $0$. 

So 
\begin{small}
\begin{equation*}
\begin{split}
&\int \{\Sigma_{i=1}^\infty \lambda_i\phi_i(x)\phi_i^*(X')K(X',X')^{-1}\lambda_i\phi_i(X')\phi_i^*(x)\}p(x)dx\\
&=\Sigma_{i=1}^\infty \lambda_i^2\phi_i^*(X')K(X',X')^{-1}\phi_i(X').
\end{split}
\end{equation*}
\end{small}

For each i, focus on $\phi_i^*(X')K(X',X')^{-1}\phi_i(X')$.

Using numerical approximation of eigenfunctions \cite{Rasmussen2006}, when each $x_l$ is sampled from the distribution $p(x)$,
$\lambda_i\phi_i(x)=\int k(x,x')p(x)\phi_i(x)\backsimeq \frac{1}{n}\Sigma_{l=1}^nk(x_l,x')\phi_i(x_l)$. Plugging in $x'=x_l$, we get $K(X,X)u_i=\lambda_i^{mat}u_i$, where $X=[x_l]$ and  $K_{i,j}=k(x_i,x_j)$,and each $u_i$ and $\lambda_i^{mat}$ is the eigenvector and eigenvalue of matrix $K(X,X)$, with the approximation $\phi_i(X)\backsimeq \sqrt n u_i, \frac{1}{n}\lambda_i^{mat}\backsimeq\lambda_i$.

Given a random set of $X'$, and a sampled set of $X$, although we do not know $\phi(x)$ exactly,  we can use $X$ to estimate the value of eigenfunctions of $X'$: $\phi_i(X')\backsimeq\frac{\sqrt n}{\lambda_i^{mat}}K(X',X)u_i$.

Now that
\begin{equation*}
\begin{split}
&\phi_i^*(X')K(X',X')^{-1}\phi_i(X')\\
&\backsimeq \frac{n}{(\lambda_i^{mat})^2}u_i^TK(X,X')K(X',X')^{-1}K(X',X)u_i
\end{split}
\end{equation*}
, when $n \rightarrow \infty$ we could regard all above estimations as asymptotic unbiased estimations, and here we suppose n is large enough to ignore the approximation error so the approximate equations can be seen as equations.

Applying matrix decomposition to symmetric non-negative definite matrix $K(X',X')$ ,

$K(X',X')^{-1}=\Sigma_{j=1}^n \frac{1}{ \lambda_j^{'mat}}v_jv_j^*$.

So $\frac{n}{(\lambda_i^{mat})^2}u_i^*K(X,X')K(X',X')^{-1}K(X',X)u_i=\Sigma_j\frac{n}{(\lambda_i^{mat})^2 \lambda_j^{'mat}}||u_i^*K(X,X')v_j||^2$.

On each $\frac{n}{(\lambda_i^{mat})^2 \lambda_j^{'mat}}||u_i^*K(X,X')v_j||^2$, we have:
\begin{small}
\begin{sequation}
K(X,X')=\Sigma\phi(X)\phi(X')^*=\psi(X)\psi(X')^*
\end{sequation}
\begin{sequation}
\begin{split}
&u_iK(X,X)u_i^*=\lambda_i^{mat}\\
&=u_i\psi(X)\psi(X)^*u_i^*=||u_i\psi(X)||^2
\end{split}
\end{sequation}
\begin{sequation}
\begin{split}
&v_jK(X',X')v_j^*= \lambda_j^{'mat}\\
&=v_j\psi(X')\psi(X')^Tv_j^*=||v_i\psi(X')||^2
\end{split}
\end{sequation}
\end{small}

According to Cauchy-Schwartz inequality, 

$|u_i\psi(X)\psi(X')^*v_j^*|\leqslant ||u_i\psi(X)||\quad||v_i\psi(X')||=\sqrt {\lambda_i^{mat}  \lambda_j^{'mat}}.$

So 
\begin{small}
\begin{sequation}
\begin{split}
&\frac{n}{(\lambda_i^{mat})^2 \lambda_j^{'mat}}||u_i^TK(X,X')v_j||^2\leqslant \frac{n}{(\lambda_i^{mat})^2 \lambda_j^{'mat}}\lambda_i^{mat} \lambda_j^{'mat}\\
&=\frac{n}{\lambda_i^{mat}}=\frac{1}{\lambda_i},
\end{split}
\end{sequation}
\end{small}

and when $ \lambda_j^{'mat}$ equals to $\lambda_i^{mat}$ the result can reach its largest, and the lowest expectation of overall conditional variance is $\int k(x,x)p(x)dx - \Sigma_{i=1}^\infty \lambda_i$. Especially,  when RBF kernel is selected, under any $p(x)$ the lowest expectation would be $1-\Sigma_{i=1}^\infty \lambda_i=1-\lim_{n \to \infty}\frac{\Sigma_{i=1}^n\lambda_i^{mat}}{n}=1-\lim_{n \to \infty}\frac{trace(K_{XX})}{n}=0$.

Moreover, if the kernel contains noise as below:
\begin{sequation}
\begin{split}
f^*|X^*,X,f \backsim \mathcal{N}(K(X^*,X)[K(X,X)+\sigma^2I]^{-1}f, \\K(X^*,X^*)-K(X^*,X)[K(X,X)+\sigma^2I]^{-1}K(X,X^*)),
\end{split}
\end{sequation}
$K(X,X)+\sigma^2I$ would still be a symmetric non-negative definite matrix, so the eigenvalues in previous analysis would all be added with $\sigma^2$, and the eigenvectors remain the same. Obviously the conclusion still remains the same.
\end{proof} 
Notice that during a learning process of RL, if the agent has not learned how to perform perfectly yet, under present policy the states which the agent would come across would not be those of highest real value. So non-perfect demonstrations are necessary to lower the expectation of uncertainty during exploration.

\subsection{BayesRegret of GP-based Posterior Sampling}
\label{bg}
\subsubsection{deterministic MDP}
We start with a simple case where transitions are deterministic in MDP.

We model each MDP $M=\{S,A,R^{M},P^{M},H\}\sim\phi$, with potentially infinite sets of states $S$ and actions $A$. 
$H$ is the length of a single episode. At timestep t of an episode, the agent observe $s_t \in S$, select $a_t \in A$, receive a reward $r^M_{s_t,a_t} \sim R^M(s_t,a_t)$ and transition $s_{t+1}=P^M(s_t,a_t)$. $\bar{r}^M_{s_t,a_t}=\mathbb{E}[r^M_{s_t,a_t} |r^M_{s_t,a_t} \sim R^M(s_t,a_t)].$ 

$\mu$ is the policy function of state, and value function:
$V_{\mu,M}(s)=\mathbb{E}[\Sigma_{i=0}^{\infty}\gamma^{i}\bar{r}^M_{s_{i+1},a_{i+1}}|s_{i+1}=P^M(s_i,a_i),a_i=\mu(s_i)]$, 
where $\gamma$ is the rate of discount and satisfies $0<\gamma \leq  1$.

$M^k$ is the posterior sample of unknown true MDP $M^*$ given history $\mathcal{H}_{kt}$, $\mathcal{H}_{kt}=\{s_{1,1},a_{1,1},r_{1,1},s_{1,2},......s_{k,t-1},a_{k,t-1},r_{k-1,t-1}\}$.  
$\mu^M$is the optimistic policy under $M$, $ \mu^{k} \in arg max_{\mu} V_{\mu,M^k}(s)$, and particularly,  $\mu^{k},\mu^*$is the optimistic policy under $M^{k},M^*$ separately. $\pi$ indicates the learning algorithm which choose a policy $\mu$ for the agent to perform. 

We assume that given MDP $V_{\mu^M,M}(s)$ is joint normal on the set of state $S$ with optimal policy $\mu^M$ in $M$, which contains the assumption of the model using a model-free method.

Define expected cumulative reward of the kth episode:
\begin{sequation}
\begin{split}
&S_{k}^{\mu,M}=\mathbb{E}[\Sigma_{t=H(k-1)+1}^{Hk} \bar{r}^M_{s_t,a_t}\\
&|s(t+1)=P^M(s_t,a_t),a_i=\mu(s_i)].
\end{split}
\end{sequation}

Regret :
\begin{sequation}
Regret(T,\pi,M^*)=\Sigma_{k=1}^{\lceil\frac{T}{H}\rceil} (S_k^{\mu^*,M^*}-S_k^{\mu^k,M^*}).
\end{sequation}

The regret of every episode is random due to the unknown true MDP $M^*$, the learning algorithm $\pi$, the sampling $M^k$ of the present episode and previous sampling through history $\mathcal{H}_{k1}$. Notice that in our algorithm we do not directly sample $M^k$ from the posterior distribution $\phi(\cdot|\mathcal{H}_{k1})$ and we use the posterior distribution of the value to realize our sampling. But for convenience we would use sampled $M^k$ to refer to our way of sampling in practice.

And Bayesian regret:
\begin{sequation}
\begin{split}
&BayesRegret(T,\pi,\phi)=\\
&\mathbb{E}[\Sigma_{k=1}^{\lceil\frac{T}{H}\rceil} (S_k^{\mu^*,M^*}-S_k^{\mu^k,M^*})|M^*\sim \phi].
\end{split}
\end{sequation}

which is actually the same with the regret defined by Osband \& Van Roy\yrcite{osband17a}. Since we have different definition of the value function, we use other notations to avoid confusion. 

We separate this BayesRegret by episodes, where each episode k conditioned on the previous history $\mathcal{H}_{k1}$ then taking expectation again in order to achieve the expectation on $M^*$. We discuss the relation between BayesRegret and the conditional regret, which is different from the method of previous work \cite{osband17a}.  The conditional regret is $\mathbb{E}[S_k^{\mu^*,M^*}-S_k^{\mu^k,M^*}|M^* \sim \phi(\cdot|\mathcal{H}_{k1})]$, and $\Sigma_{k=1}^{\lceil\frac{T}{H}\rceil}\mathbb{E}[S_k^{\mu^*,M^*}-S_k^{\mu^k,M^*}|M^* \sim \phi]=\mathbb{E}[\Sigma_{k=1}^{\lceil\frac{T}{H}\rceil}\{\mathbb{E}[S_k^{\mu^*,M^*}-S_k^{\mu^k,M^*}|M^* \sim \phi(\cdot|\mathcal{H}_{k1})]| \mathcal{H}_{k1} \sim PreviousSampling\}]$. It is obvious that each $\mathcal{H}_{k1}$ here contains previous history  and not independent. So when we take expectation of a series of $\mathcal{H}_{k1}$, we actually take expectation on whole history $\mathcal{H}$.  


Since we will use the stochastic property of $M^k$ to analyze the bound, another thing to notice is that since $\mathcal{H}$ is actually produced by every sampled $M^k$, taking expectation of $\mathcal{H}$ would not disturb the distribution of $M^k$. So if we can bound conditional regret (as described above) on every possible $M^k$ from its distribution, then taking the expectation would also bound BayesRegret.

\begin{theorem}  
Let $M^*$ be the true MDP with deterministic transitions according to prior $\phi$ with values under GP hypothesis. Then the regret for GPPSTD is bounded:
$BayesRegret(T,\pi^{GPPSTD},\phi)=  \tilde{O}(\sqrt{HT}).$
\end{theorem}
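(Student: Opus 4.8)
The plan is to follow the posterior-sampling decomposition developed for PSRL, but adapted to the value-function GP model and exploiting the deterministic transitions. The starting point is the \emph{posterior sampling property}: conditioned on the history $\mathcal{H}_{k1}$, the true MDP $M^*$ and the sampled MDP $M^k$ are identically distributed, and since the optimal policy is a deterministic functional of the MDP, the pairs $(\mu^*,M^*)$ and $(\mu^k,M^k)$ are identically distributed as well. Hence $\mathbb{E}[S_k^{\mu^*,M^*}\mid\mathcal{H}_{k1}]=\mathbb{E}[S_k^{\mu^k,M^k}\mid\mathcal{H}_{k1}]$, which lets me rewrite the per-episode conditional regret as $\mathbb{E}[S_k^{\mu^*,M^*}-S_k^{\mu^k,M^*}\mid\mathcal{H}_{k1}]=\mathbb{E}[S_k^{\mu^k,M^k}-S_k^{\mu^k,M^*}\mid\mathcal{H}_{k1}]$. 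The crucial gain is that both terms on the right now use the \emph{same} policy $\mu^k$, so their difference is a pure estimation error between the sampled value model and the true one, rather than a comparison of distinct optimal policies.

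Second, I would bound this estimation error by the GP posterior uncertainty. Because transitions are deterministic, following $\mu^k$ from the initial state produces a fixed trajectory of $H$ states, and $S_k^{\mu^k,M}$ is a path functional of the value model along that trajectory. Both $M^k$ and $M^*$ are draws from the same GP posterior given $\mathcal{H}_{k1}$, so at each visited state $s_{k,t}$ their values concentrate around the posterior mean within a band of width proportional to the posterior standard deviation $\sigma_k(s_{k,t})$. Taking expectations, the per-episode conditional regret is bounded, up to a slowly growing confidence factor $\beta_k$, by $\sum_{t=1}^{H}\sigma_k(s_{k,t})$.

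Third, I would aggregate across episodes by grouping the visited states according to their position $t\in\{1,\dots,H\}$ within the episode. For a fixed position, the states $s_{k,t}$ visited across the $K=\lceil T/H\rceil$ episodes form a single GP regression sequence, so the sum of squared posterior standard deviations is controlled by the maximum information gain $\gamma_K$, which is only polylogarithmic for the kernels considered --- this is exactly where the eigenvalue / Mercer analysis of Theorem~1 feeds in. Cauchy--Schwarz then gives $\sum_{k=1}^{K}\sigma_k(s_{k,t})\le\sqrt{K\sum_k\sigma_k^2}=\tilde{O}(\sqrt{K})=\tilde{O}(\sqrt{\lceil T/H\rceil})$ for each single position. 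Summing the $H$ positions yields $H\cdot\tilde{O}(\sqrt{T/H})=\tilde{O}(\sqrt{HT})$, and the remark at the end of the excerpt --- that taking expectation over the history does not disturb the distribution of $M^k$ --- lifts the conditional bound to the stated BayesRegret bound.

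The main obstacle I anticipate is the second step: making rigorous the claim that the value difference $S_k^{\mu^k,M^k}-S_k^{\mu^k,M^*}$ is dominated by the sum of posterior standard deviations along the trajectory. This requires a clean relationship between the episodic cumulative reward $S_k$ and the joint-Gaussian value model --- delicate because the value function is an infinite-horizon discounted quantity while the episode is finite-horizon --- together with a concentration argument controlling how far a posterior sample and the true value can simultaneously deviate, i.e.\ choosing $\beta_k$ so that it grows only polylogarithmically and does not spoil the $\sqrt{HT}$ rate. The information-gain bound $\sum_k\sigma_k^2=\tilde{O}(\gamma_K)$ also needs justification for the non-i.i.d.\ sequence of states actually generated by the adaptive sampled policy, which is a standard but technically involved point in this line of work.
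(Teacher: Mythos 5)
Your proposal follows the GP-UCB / information-gain route (confidence widths $\beta_k\sum_t\sigma_k(s_{k,t})$, then $\sum_k\sigma_k\le\sqrt{K\sum_k\sigma_k^2}$ with $\sum_k\sigma_k^2\lesssim\gamma_K$), and this is genuinely \emph{not} the paper's argument. The paper never bounds the per-episode error by a sum of posterior widths and never invokes information gain. Instead, it exploits determinism through the telescoping identity $S_k^{\mu,M}=V_{\mu,M}(s_1)+\sum_{t=2}^{H-1}(1-\gamma)V_{\mu,M}(s_t)-\gamma V_{\mu,M}(s_H)$, so each episodic reward is a \emph{linear} functional of values that are jointly Gaussian under the GP hypothesis (Lemma 1, linear transformations of joint normals). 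Combined with the posterior-sampling lemma, both terms of the decomposition $(S_k^{\mu^*,M^*}-S_k^{\mu^k,M^k})+(S_k^{\mu^k,M^k}-S_k^{\mu^k,M^*})$ are conditionally \emph{zero-mean} Gaussians with variance at most $4H^2\,{}_{\max}\sigma^2$ (covariances bounded via Cauchy--Schwarz by the maximal variance). Summing over $K=\lceil T/H\rceil$ episodes and using independence of the sampling across episodes, the total has variance $O({}_{\max}\sigma^2 HT)$, and a single Gaussian tail bound with $\delta=1/T$ yields $2\sqrt{2\,{}_{\max}\sigma^2(HT+H)\log T}$ with probability $1-1/T$. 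The point is that the conditionally zero-mean terms \emph{cancel} at rate $\sqrt{K}$; no posterior shrinkage is needed, only boundedness of the kernel variance. This cancellation is the key idea your plan is missing --- you bound absolute deviations episode by episode, which forces you to pay for every episode and then recover the rate through $\gamma_K$.

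Two concrete gaps follow from this. First, your step 2 is not merely ``delicate'' but structurally problematic as stated: with deterministic transitions, $\mu^k$ generates one trajectory in $M^k$ and a \emph{different} trajectory in $M^*$ (the paper's $s_t$ versus $s'_t$ in its equation for the conditional expectation), so $S_k^{\mu^k,M^k}-S_k^{\mu^k,M^*}$ involves values along two distinct state sequences, only one of which is actually visited and observed. ``The posterior standard deviation along the trajectory'' is therefore ill-defined, and the paper sidesteps exactly this by using the identical conditional distribution of $(R^k,P^k)$ and $(R^*,P^*)$ to get zero mean, then bounding the variance crudely by ${}_{\max(k)}\sigma^2$ rather than by visited-state widths. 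Second, your rate $\tilde{O}(\sqrt{HT})$ requires $\gamma_K$ polylogarithmic, which holds for the squared exponential kernel but \emph{fails} for Mat\'ern kernels ($\gamma_K$ grows polynomially), whereas the paper's conclusion explicitly covers Mat\'ern since it only needs bounded variance; moreover, GPTD observes only noisy reward differences $R_{t-1}=H_tV_t+N_t$ rather than direct value observations, so the standard lemma $\sum_k\sigma_{k-1}^2(x_k)\lesssim\gamma_K$ would need re-derivation for this indirect observation model --- a nontrivial repair you acknowledge only partially (for adaptivity, not for indirectness). As written, your plan would establish a narrower theorem than the one claimed.
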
  
\begin{proof}  

Decomposition:
\begin{sequation}
S_k^{\mu^*,M^*}-S_k^{\mu^k,M^*}=(S_k^{\mu^*,M^*}-S_k^{\mu^k,M^k})+(S_k^{\mu^k,M^k}-S_k^{\mu^k,M^*}).
\end{sequation}

First we focus on the difference we can observe by the policy $\mu^k$ that the agent actually follows, i.e. $S_k^{\mu^k,M^k}-S_k^{\mu^k,M^*}$ in (12).

Referring to previous defination (9),

\begin{sequation}
\begin{split}
&\mathbb{E}_{M^*}[S_k^{\mu^k,M^k}-S_k^{\mu^k,M^*}|\mathcal{H}_{k1}]=V_{\mu^k,M^k}(s_1)\\
&+\Sigma_{t=2}^{H-1}(1-\gamma)V_{\mu^k,M^k}(s_t)-\gamma V_{\mu^k,M^k}(s_H)-\mathbb{E}_{M^*}[V_{\mu^k,M^*}(s_1)\\
&+\Sigma_{t=2}^{H-1}(1-\gamma)V_{\mu^k,M^*}(s'_t)-\gamma V_{\mu^k,M^*}(s'_H)|\mathcal{H}_{k1}],
\end{split}
\end{sequation}

where $\mathbb{E}_{M^*}[\quad|\mathcal{H}_{k1}]$ means taking expectation on $M^* \sim \phi(\cdot|\mathcal{H}_{k1})$. 
Recall our assumption of V in the definition part. Although $V_{\mu^k,M^*}$ does not satisfy joint normal distribution since its policy is not optimistic of its MDP,  $M^k$ is still sampled from the posterior distribution of $M^*$, which means that given history $\mathcal{H}_{k1}$, the posterior sample $R^k$, $P^k$ and unknown true $R^*$, $P^*$ are identically distributed. So the expectation of (13) (on $M^k$ while performing posterior sampling) is zero. So $\mathbb{E}_{M^*}[S_k^{\mu^k,M^k}-S_k^{\mu^k,M^*}|\mathcal{H}_{k1}]$ is totally zero-mean, and is a sum of a series of joint normal variables. We would focus on the variance next.

\begin{lemma}  
(Transformation of Joint Normal Variables).

If $X\sim N_p(\mu,\Sigma)$, A is a matrix of $l\times p$ and $rank(A)=l$, $Y=AX+b$, Then

$$Y\sim N_l(A\mu+b,A\Sigma A^T).$$
\end{lemma}  

To calculate the sum , let A be a vector filled with 1, so we have $\Sigma_{i=1}^n X_i \sim (N(\Sigma_{i=1}^n\mu_i,\Sigma_{i=1}^n\Sigma_{j=1}^{n}Cov(X_i,X_j)))$

Noticing that $Cov(X,Y)=\mathbb{E}[(X-\mathbb{E}[X])(Y-\mathbb{E}[Y])] $

$\leq \sqrt{\mathbb{E}[(X-\mathbb{E}[X])^2]\mathbb{E}[(Y-\mathbb{E}[Y])^2]} =\sigma_1 *\sigma_2 \leq _{max} \sigma^2$.

So we have proved that given history $\mathcal{H}_{k1}$, $\mathbb{E}_{M^*}[S_k^{\mu^k,M^k}-S_k^{\mu^k,M^*}|\mathcal{H}_{k1}]$ is normally distributed with expectation of 0 and variance $\leq H^2_{max(k)}\sigma^2$, where $_{max(k)}\sigma^2$ is the max variance of every state in episode k.

Now back to the first difference of (12).

\begin{lemma}  (Posterior Sampling).

If $\phi$ is the distribution of $M^*$, then for any  $\sigma(_{\mathcal{H}k1})$-measurable function g,

$$ \mathbb{E}[g(M^*)|\mathcal{H}_{k1}]=\mathbb{E}[g(M^k)|\mathcal{H}_{k1}].$$
\end{lemma}  

Using the posterior lemma, the cumulative reward $S_k^{\mu^M,M}$ is $\sigma(H_{k1})$-measurable, so $ \mathbb{E}[S_k^{\mu^*,M^*}-S_k^{\mu^k,M^k}|\mathcal{H}_{k1}]=0$ \cite{osband2013}.  

Recall that $S_{k}^{\mu,M}$ is the sum of joint normal variables,  so similar to previous analysis, each $\mathbb{E}_{M^*}[S_k^{\mu^k,M^k}-S_k^{\mu^*,M^*}|\mathcal{H}_{k1}]$ is normally distributed with zero-mean and variance $\leq H^2_{max(k)}\sigma^2$.

So $\mathbb{E}_{M^*}[(S_k^{\mu^*,M^*}-S_k^{\mu^k,M^k})+(S_k^{\mu^k,M^k}-S_k^{\mu^k,M^*})|\mathcal{H}_{k1}]$ has zero-mean and variance $\leq 4H^2_{max}\sigma^2$ by analyzing covariance as previous part.

For normal distribution $X \sim \mathcal{N}(0,\sigma^2)$, and for any $1>\delta>0$, 
$\mathbb{P}(X\leq\sqrt{-2\sigma^2log\delta})\geq 1-\delta$,
which means there is a probability of $1-\delta$ that 
$X \leq \sqrt{-2  \sigma^2log\delta}$.

So noticing the independence of sampling between episodes, calculate $\mathbb{E}_{\mathcal{H}}[\Sigma_{k=1}^{\lceil\frac{T}{H}\rceil} (S_k^{\mu^*,M^*}-S_k^{\mu^k,M^k})|\mathcal{H}_{k1}]$ as analyzed before, where $\mathbb{E}_{\mathcal{H}}$ means taking expectation on $\mathcal{H}$. Set $\delta$ as $\frac{1}{T}$, and let $_{max}\sigma^2$ be the max variance of all states in all episodes (just for worst case bound), and there is a probability of $1-\frac{1}{T}$ that:

$\mathbb{E}[\Sigma_{k=1}^{\lceil\frac{T}{H}\rceil}(S_k^{\mu^k,M^*}-S_k^{\mu^k,M^*})|M^* \sim \phi]$

$\leq 2\sqrt{2_{max}\sigma^2(HT+H)logT}.$

\end{proof} 

In general cases (like RBF and Matern), $\sigma^2$ is bounded (in a few cases like dot-product kernels, covariance cannot be bounded only in infinite spaces, while most continuous spaces in RL has borders), so this could be a sub-linear bound which means the agent would actually learn the real MDP in the end. Notice that we use $_{max}\sigma^2$ only for a worst case bound in brief, while the true regret is related with each variance and covariance of the state. This result is better than previous posterior sampling analysis (PSRL bounds $\sqrt{HSAT}$ empirically but $H\sqrt{SAT}$ theoretically). As GP gets more information of the environment during exploration, the variance would decay, so actually the bound could be even better.

\subsubsection{non-deterministic MDP}

True MDP $M^*=\{S,A,R^{M},P^M,H,\rho\}\sim\phi$, other notations are just the same as 3.2.1, 
except that $P^M$ is a stochastic transition in $M$,
$\rho$ is the distribution of initial states.

Since the transition is not deterministic and the states are continuous, the cumulative reward could be related to countless states of values. Since we do not have assumptions on stochastic transition function, which is necessary for regret analysis in non-deterministic environment, we focus on the cumulative estimation error for any single state during the learning process.
\begin{sequation}
CumError(T,\pi,M^*,s)=\Sigma_{k=1}^{\lceil\frac{T}{H}\rceil} (V_k^{\mu^*,M^*}(s)-V_k^{\mu^k,M^*}(s)).
\end{sequation}

We would show that $CumError$ can also lead to the convergence of estimation as described below. We put the proof of Theorem 3 in Appendix A.
\begin{theorem}  
Let $M^*$ be the true MDP with non-deterministic transitions according to prior $\phi$ with with values under GP hypothesis, we have the bayesian cumulative error of estimation of any single state $s$:
$\mathbb{E}[CumError(T,\pi,M^*,s)|M^*\sim \phi]=\tilde{O}( \sqrt{\lceil\frac{T}{H}\rceil})$.
And let $\mathcal{M}$ be any family of MDPs with non-zero probability under the prior $\phi$. Then for any $\epsilon \geq 0$:

$\mathbb{P}(\frac{CumError(T,\pi,M^*,s)}{T}\geq \epsilon|M^*\in \mathcal{M}) \rightarrow 0.$
\end{theorem}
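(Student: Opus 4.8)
The plan is to mirror the episode-by-episode decomposition of Theorem 2, specialised to the value at the single state $s$ so that each episode contributes one joint-normal variable instead of a sum of $H$ of them. First I would write, for each episode $k$,
\begin{sequation}
V_k^{\mu^*,M^*}(s)-V_k^{\mu^k,M^*}(s)=\left(V_k^{\mu^*,M^*}(s)-V_k^{\mu^k,M^k}(s)\right)+\left(V_k^{\mu^k,M^k}(s)-V_k^{\mu^k,M^*}(s)\right).
\end{sequation}
For the first bracket, $V_k^{\mu^M,M}(s)$ under the optimal policy is $\sigma(\mathcal{H}_{k1})$-measurable, so Lemma 2 (posterior sampling) gives $\mathbb{E}[V_k^{\mu^*,M^*}(s)\mid\mathcal{H}_{k1}]=\mathbb{E}[V_k^{\mu^k,M^k}(s)\mid\mathcal{H}_{k1}]$ and the term is conditionally zero-mean. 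For the second bracket I would reuse the fact that, given $\mathcal{H}_{k1}$, the posterior draw $(R^k,P^k)$ and the truth $(R^*,P^*)$ are identically distributed; since for a single state the value already integrates out the stochastic transition, the model-error term is again conditionally zero-mean. Under the GP hypothesis both brackets are Gaussian, so by Lemma 1 and the Cauchy--Schwarz covariance bound used in Theorem 2 each episode's contribution is centred normal with variance at most $4\,{}_{max(k)}\sigma^2$, now without the $H^2$ factor because there is no summation over the horizon.

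Next I would aggregate over the $\lceil\frac{T}{H}\rceil$ episodes. Using the independence of the sampling across episodes, and bounding cross-episode covariances by Cauchy--Schwarz exactly as before, $CumError$ is, conditionally on the history, centred normal with variance at most $4\lceil\frac{T}{H}\rceil\,{}_{max}\sigma^2$. Applying the Gaussian tail bound $\mathbb{P}(X\le\sqrt{-2\sigma^2\log\delta})\ge 1-\delta$ with $\delta=\frac{1}{T}$, together with the fact that the folded-normal mean of a centred Gaussian is proportional to its standard deviation, then yields $\mathbb{E}[CumError(T,\pi,M^*,s)\mid M^*\sim\phi]=\tilde{O}(\sqrt{\lceil\frac{T}{H}\rceil})$, the logarithmic factor hidden in $\tilde{O}$ arising from the tail step. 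This settles the first claim.

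For the second claim I would combine Markov's inequality with conditioning on a positive-probability event, in the spirit of Osband \& Van Roy. Since $CumError\ge 0$ and the first part gives $\mathbb{E}[CumError\mid M^*\sim\phi]=\tilde{O}(\sqrt{\lceil\frac{T}{H}\rceil})=o(T)$, Markov gives, for $\epsilon>0$, $\mathbb{P}(CumError/T\ge\epsilon)\le \mathbb{E}[CumError]/(\epsilon T)=\tilde{O}(1/(\epsilon\sqrt{HT}))\to 0$. Writing $p=\mathbb{P}(M^*\in\mathcal{M})>0$ by hypothesis, I would then use $\mathbb{P}(CumError/T\ge\epsilon\mid M^*\in\mathcal{M})\le \mathbb{P}(CumError/T\ge\epsilon)/p\to 0$, which is the stated convergence.

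I expect the main obstacle to be the second bracket $V_k^{\mu^k,M^k}(s)-V_k^{\mu^k,M^*}(s)$ in the non-deterministic regime: unlike the deterministic case, the policy $\mu^k$ now induces a genuinely stochastic trajectory in $M^*$, so justifying that this model-error term is exactly zero-mean --- rather than merely small --- needs care once the policy and the evaluating MDP are decoupled, and one must check that the joint-normal structure and the per-episode variance scale $\sigma^2$ (without horizon factors) truly survive this decoupling for a single state. The aggregation over episodes and the Markov/conditioning step are then routine.
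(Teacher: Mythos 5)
Your proposal is correct within the paper's framework and takes essentially the same route as the paper's (appendix) proof: it specializes Theorem~2's episode decomposition $\bigl(V_k^{\mu^*,M^*}(s)-V_k^{\mu^k,M^k}(s)\bigr)+\bigl(V_k^{\mu^k,M^k}(s)-V_k^{\mu^k,M^*}(s)\bigr)$ to a single state, uses Lemma~2 plus the identically-distributed-posterior argument to get conditionally zero-mean Gaussian per-episode terms with variance $O({}_{max}\sigma^2)$ and no $H^2$ factor, aggregates over the $\lceil\frac{T}{H}\rceil$ independent episode samples with the Gaussian tail bound at $\delta=\frac{1}{T}$ to obtain $\tilde{O}(\sqrt{\lceil\frac{T}{H}\rceil})$, and gets the convergence claim from Markov's inequality combined with conditioning on the positive-probability family $\mathcal{M}$, exactly as you lay out. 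The soft spot you flag --- whether the model-error bracket is truly zero-mean once $\mu^k$ is decoupled from the evaluating MDP in the stochastic regime --- is real, but it is inherited from the paper's own Theorem~2 argument rather than introduced by you, so your attempt is faithful to the paper's proof.
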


\subsection{Demonstrations for Posterior Samping}

Now back to our reason to make use of demonstrations. Consider the expected variance of all states with distribution $p(s)$ of our estimate of value function, where $p(s)$ is determined by posterior distribution of value function and the present policy. The analysis  in 3.2.1\&3.2.2 use $_{max}\sigma^2$ only for a worst bound, while the real situation is determined by every single $\sigma^2$. So if we get lower expected variance, lower regret would be achieved with a high probability by Markov's inequality:  $\mathbb{P}(\sigma^2\ \geq a)\leq \frac{\mathbb{E}[\sigma^2]}{a}$.  That is, with the same parameter $a$, the lower the expectation is, there is a lower probability that $\sigma^2$ would be larger than $a$. 

Above analysis requires that we use sample set $X$ which from distribution $p(x)$ as demonstrations, while in fact we do not know the exact $p(x)$. So as a 
compromise, we could improve the efficiency of our learning process by demonstrations that contains similar situations to present episode, which is rational from intuition, and also produce better result in practice in Section \ref{sec:experiment}.

\section{Gaussian Process for Posterior Sampling}
\subsection{Gaussian Process Temporal Difference}
\label{sec:GPTD}
GPTD was firstly introduced in \citealt{engel2003bayes}, then improved in \citealt{Engel2005}. We'll briefly explain its overview framework here since our algorithm is closely related to it. 

GPTD proposes a generative model for the sequence of rewards corresponding to the trajectory $x_1, x_2, \cdots, x_t$: 
\begin{sequation}
\label{eq:GPTDRVN}
R(x_i, x_{i+1}) = V(x_i) - \gamma V(x_{i+1}) + N(x_i, x_{i+1})
\end{sequation}
where $R$ is the reward process observed in experience, $V$ is the value Gaussian process, and $N$ is a noise process. 

Define 
\begin{sequation}
\label{eq:GPTDH}
H_t =  \left[
 \begin{matrix}
   1 & -\gamma & 0 & \cdots & 0 \\
   0 & 1 & -\gamma & \cdots & 0 \\
   \vdots & & & & \vdots \\
   0 & 0 & \cdots & 1 & -\gamma \\
  \end{matrix}
  \right]
\end{sequation}
We may rewrite (\ref{eq:GPTDRVN}) using (\ref{eq:GPTDH}) as
\begin{sequation}
R_{t-1} = H_tV_t + N_t
\end{sequation}

In order to complete the probabilistic generative model connecting reward observations and values, we may impose a Gaussian prior over $V$, i.e. $ V \sim \mathcal{N}(0, k(\cdot, \cdot))$, in which $k$ is the kernel chosen to reflect our prior beliefs concerning the correlations between the values. We also need to define $N_t \sim \mathcal{N}(\mathbf{0}, \Sigma_t)$ with $\Sigma_t = \sigma^2H_tH_t^T$ and $\sigma$ is the observation noise level\cite{Engel2005}. 

Since both the value prior and the observation noise are Gaussian, the posterior distribution of the value conditioned on observation sequence $\mathbf{r}_{t-1} = {(r_0, \cdots, r_{t-1})}^T$ are also Gaussian and given by
\begin{sequation}
\begin{split}
\label{eq:GPTD}
&\hat{v}_t(x) = {\mathbf{k}_t(x)}^T\bm{\alpha}_t \\
&p_t(x) = k(x, x) - {\mathbf{k}_t(x)}^T\mathbf{C}_tk_t(x)\\
where \quad &\mathbf{k}_t(x) = {(k(x_0, x), \cdots, k(x_t, x))}^T\\
& \mathbf{K}_t = \left[
 \begin{matrix}
   k(x_0, x_0) & k(x_0, x_1) & \cdots & k(x_0, x_t) \\
   k(x_1, x_0) & k(x_1, x_1) & \cdots & k(x_1, x_t) \\
   \vdots & \vdots & & \vdots \\
   k(x_t, x_0) & k(x_t, x_1) & \cdots & k(x_t, x_t) \\
  \end{matrix}
  \right] \\
&\bm{\alpha}_t = \mathbf{H}_t^T{(\mathbf{H}_t\mathbf{K}_t\mathbf{H}_t^T + \Sigma_t)}^{-1}\mathbf{r}_{t-1}\\
&\mathbf{C}_t = \mathbf{H}_t^T{(\mathbf{H}_t\mathbf{K}_t\mathbf{H}_t^T + \Sigma_t)}^{-1}\mathbf{H}_t\\
\end{split}
\end{sequation}

\subsection{GPPSTD}
\label{sec:GPPSTD}
Now we are ready to present Gaussian Process Posterior Sampling Temporal Difference (GPPSTD) algorithm, described in Algorithm \ref{alg:GPPSTD}. We adopt the GPTD framework to gain the posterior Q value distribution of state action pair conditioned on all reward experiences by Equation \ref{eq:GPTD}. We note that similar to GPSARSA method\cite{Engel2005}, we treat state action pair as $x_t$, therefore model Q value of state action pair rather than V value of state in GP. We also use episodic algorithm with fixed episode length as required by the analysis.

As analyzed before, we only update GP model after one episode ends. Posterior sampling should depend on the joint distribution of all the state-action pair in one episode. But during the exploration, the agent would not know exactly what state-action pair it would come across in the following steps within the episode. We overcome this problem by using conditional distribution of joint variables as the analysis below. 

We applied posterior sampling method by $a = \argmax_a Q_{sampled}(s_t, a)$. Denote the already sampled $Q_i = Q(s_i, a_i) (i = 1, 2, \cdots, t)$. In a single episode, when $Q_1, Q_2, \cdots, Q_{t-1}$ have been sampled, posterior $Q_t$ and all previous $Q$ are joint Gaussian distributed, 
\begin{equation}
\begin{split}
\left[
\begin{matrix}
&\mathbf{Q}_{1\cdots t-1} \\
&\mathbf{Q}(s_t, \cdot) \\
\end{matrix}
\right] \sim 
 \mathcal{N}(\left[
 \begin{matrix}
   & \bm{\mu}_{1\cdots t-1} \\
   & \bm{\mu}_t(s_t, \cdot) \\
  \end{matrix}
 \right], 
 \left[
 \begin{matrix}
   & \bm{\Sigma}_{xx} & \bm{\Sigma}_{x^*x} \\
   & \bm{\Sigma}_{xx^*} & \bm{\Sigma}_{x^*x^*} \\
  \end{matrix}
 \right])
 \\
\end{split}
\end{equation}
in which $\mathbf{Q}(s_t, \cdot)$ stands for Q values of all actions possibilities in $s_t$, $\bm{\mu}(s_t, \cdot)$ stands for their posterior means, and $\bm{\Sigma}_{xx}$, $\bm{\Sigma}_{x^*x}$, $\bm{\Sigma_{x^*x^*}}$ stands for posterior covariance matrix given by GP. Using standard multivariate Gaussian conditional results, we gain posterior sampling 

\begin{sequation}
\label{eq:Qsample}
\begin{split}
\mathbf{Q}(s_t, \cdot) \sim \mathcal{N}(&\bm{\mu}_t(s_t, \cdot)+\bm{\Sigma}_{x^*x} \bm{\Sigma}_{xx}^{-1}(\mathbf{Q}_{1\cdots t-1}-\bm{\mu}_{1\cdots t-1}),\\
&\bm{\Sigma}_{x^*x^*}-\bm{\Sigma}_{x^*x}\bm{\Sigma}_{xx} ^{-1}\bm{\Sigma}_{xx^*})
\end{split}
\end{sequation}

By subtracting $\bm{\mu}_t$ in (\ref{eq:Qsample}), we have each conditional noise 
\begin{sequation}
\label{eq:nsample}
\begin{split}
n(s_t, a_t) \sim \mathcal{N}(&\bm{\Sigma}_{x^*x} \bm{\Sigma}_{xx}^{-1}\bm{n}_{1\cdots t-1},\\
&\bm{\Sigma}_{x^*x^*}-\bm{\Sigma}_{x^*x}\bm{\Sigma}_{xx} ^{-1}\bm{\Sigma}_{xx^*}))
\end{split}
\end{sequation}

\begin{figure*}[t]
\vskip 0.2in
\begin{center}
\centerline{\includegraphics[width=\columnwidth]{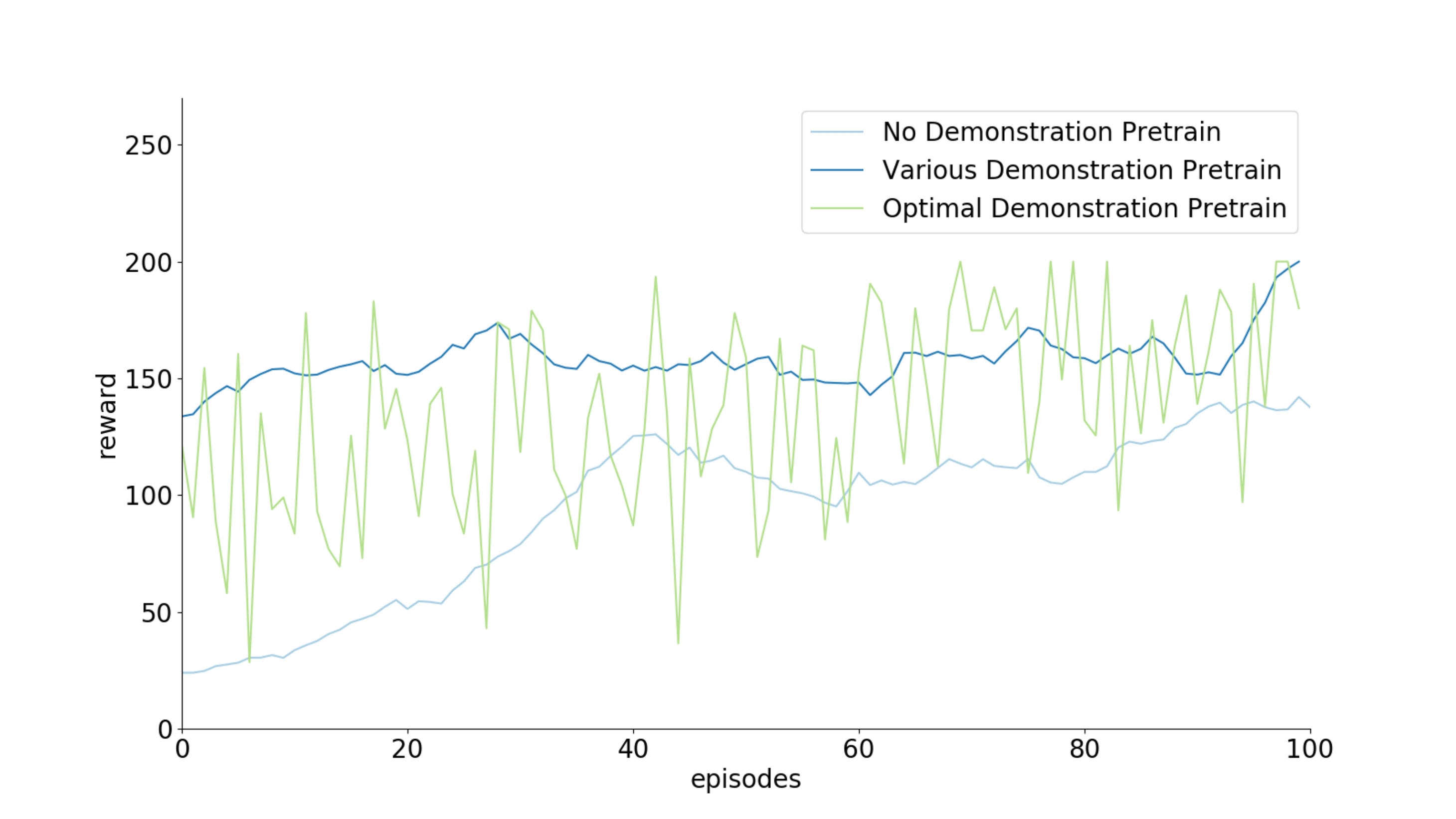}\includegraphics[width=\columnwidth]{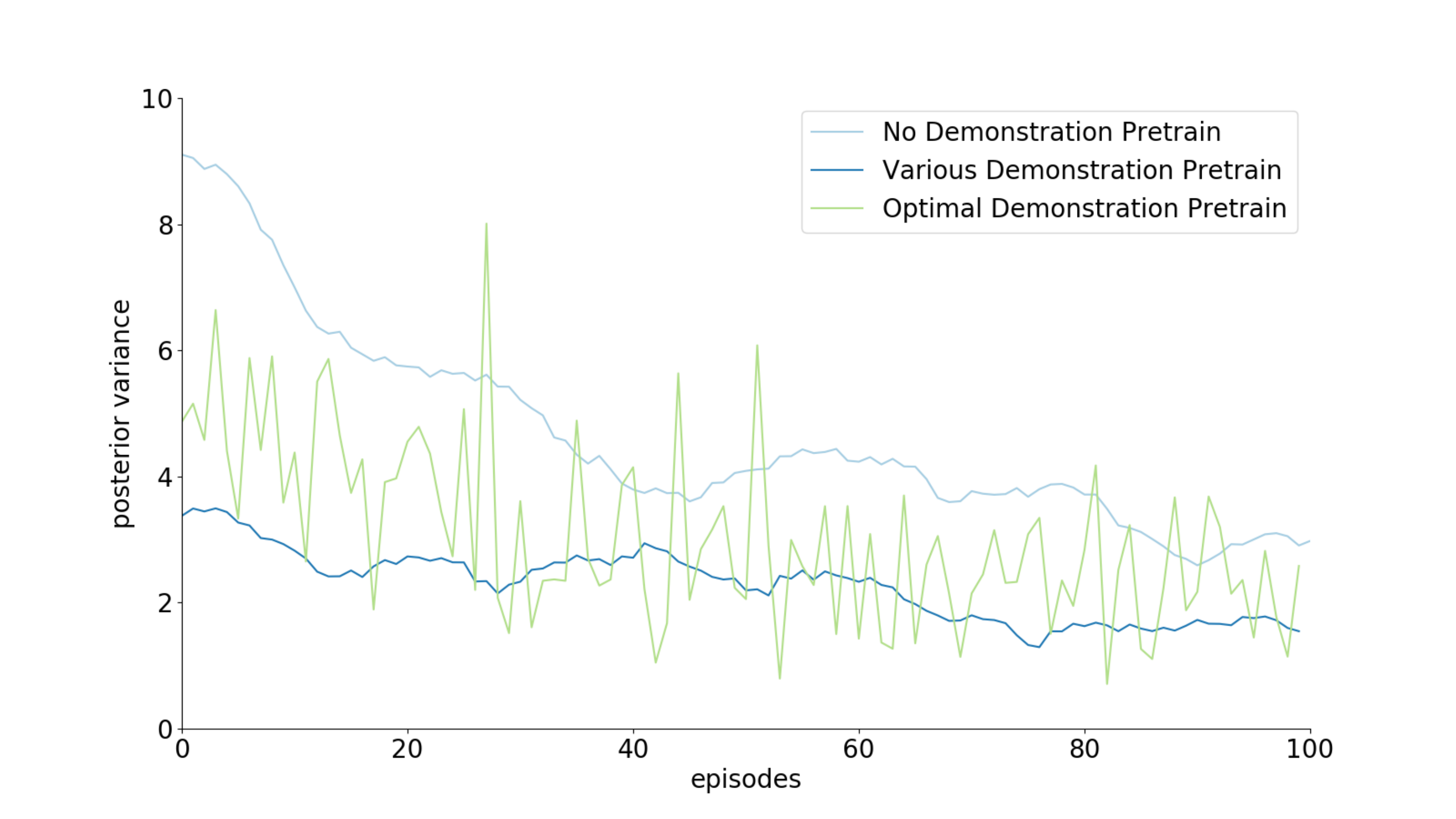}}
\caption{Performance and Variance comparison between no-pretrain, optimal-pretrain and various-pretrain settings. The results are the average of 5 experiments}
\label{fig:pretrain}
\end{center}
\vskip -0.2in
\end{figure*}

So at each timestep t, we perform action selection by sampling a noise from conditional distribution, add it to posterior mean of $Q$ and choose the best action according to the noised $Q$. At the end of the episode we use collected observation sequence to update our GP model by updating $\bm{K}_t, \bm{\alpha}_t, \bm{C}_t$ in (\ref{eq:GPTD})(for more detail, we refer readers to \citealt{Engel2005}). This exploration is bounded by Theorem 2 (in deterministic environments) or Theorem 3 (with non-deterministic environments).

It is worth mentioning that because our policy remain unchanged during one episode, it achieves deep exploration. \cite{russo2017tutorial}.

\begin{algorithm}[tb]
   \caption{GPPSTD}
   \label{alg:GPPSTD}
\begin{algorithmic}
   \STATE {\bfseries Initialize} GP model $M$
   \REPEAT
   \STATE {\bfseries Initialize} initial state $s_1$, Memory of the episode
   \FOR{{\bfseries timestep} $t=1$ {\bfseries to} $H$}
   \STATE Obtain $\mu(s_t, \cdot)$,$\Sigma$ from $M$ using \ref{eq:GPTD}
   \STATE Sample $n(s_t, \cdot)$ according to \ref{eq:nsample}
   \STATE Perform $a=\argmax_a(\mu(s_t, a) + n(s_t, a))$
   \STATE Observe $s_{t+1}$,$r$
   \STATE Memory.add($(s_{t-1}, a_{t-1}, r, s_{t}, a_{t})$
   \ENDFOR
   \STATE GPTD.Update(M, Memory)
   \UNTIL {M convergence requirement satisfied}
\end{algorithmic}
\end{algorithm}

\subsection{Pretrain}
Now let's see how we can make use of various demonstrations to make GPPSTD more efficient. The way we pretrain GP model $M$ is exactly the same as training. For RL, the "test" point distribution $p(x)$ is the experience collected in environment, which is determined by its current knowledge (in our case, value) and exploration strategy. According to analysis in Section \ref{sec:expectation_variance}, a training set sampled from $p(x)$ could give the lowest expected uncertainty, then help to avoid GPPSTD algorithm from meaningless exploration, resulting in the efficiency bound in Section \ref{bg}.

Intuitively, we could regard the various-pretrain as an sketch overview of the Q value over state action space, and this sketch helps RL agent explore smartly. Though we just pretrain data with training method, we note that it is extremely hard for the agent to obtain the sketch alone, since a large proportion of space can't be accessed by RL agent itself for lack of systematic information especially in the beginning of the training. 

\begin{figure}[t]
\vskip 0.2in
\begin{center}
\centerline{\includegraphics[width=\columnwidth]{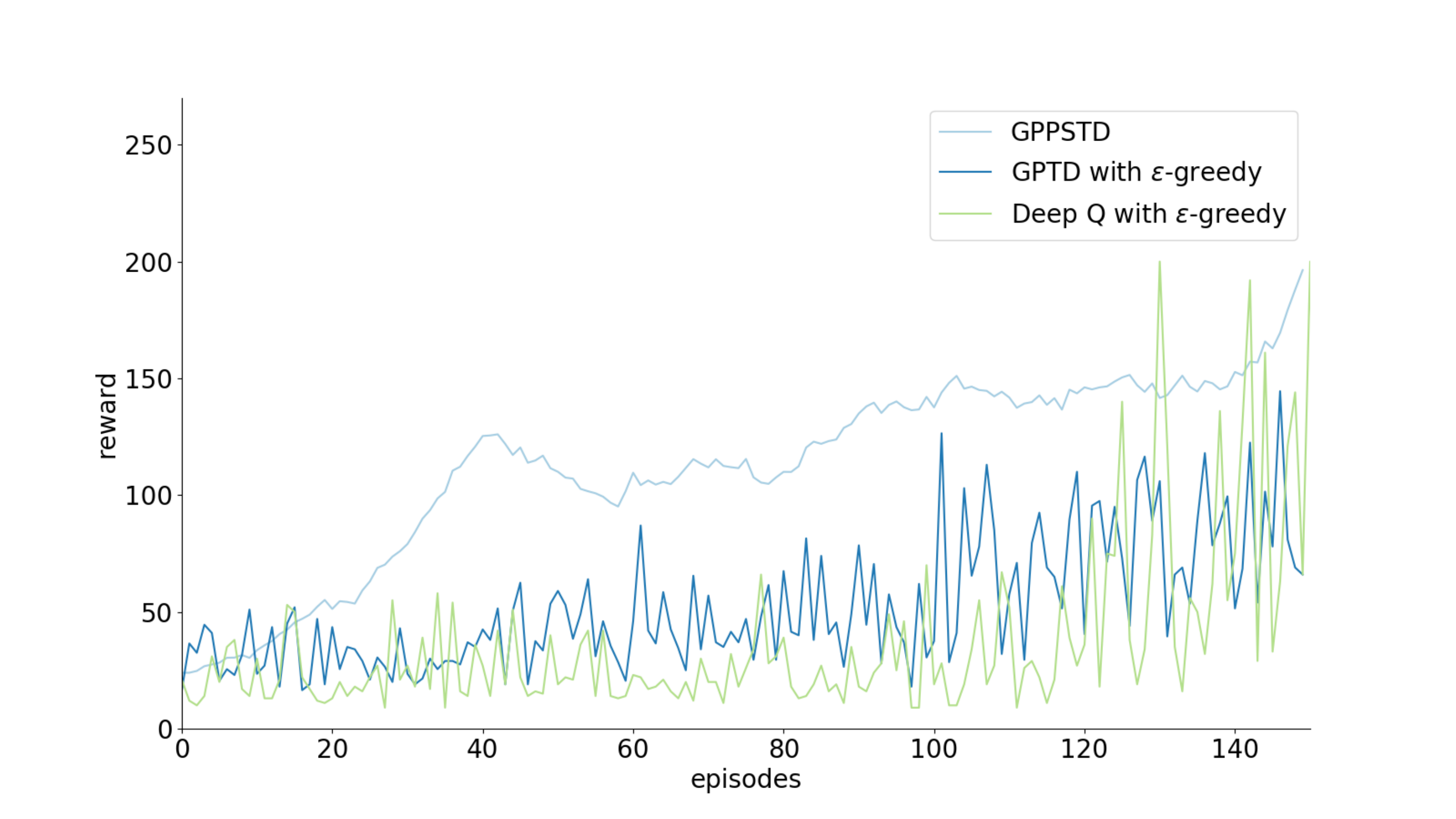}}
\caption{Performance comparison between GPPSTD, GPTD with $\epsilon$-greedy and Deep Q with $\epsilon$-greedy. The results are the average of 5 experiments}
\label{fig:nopretrain}
\end{center}
\vskip -0.2in
\end{figure}

\section{Gaussian Process and Bayesian Neural Network}
\label{bnn}
Now we'll be ready to discuss the general relationship between GP and bayesian neural networks, expanding our ideas to BNN.
Neal \yrcite{Neal96} had shown that Bayesian neural networks with infinitely many hidden units converged to Gaussian Process with a particular kernel (covariance) function. Recently, \citeauthor{lee2018} \yrcite{lee2018} has proposed NNGP to perform Bayesian prediction with a deep neural network which could outperform standard neural networks trained with stochastic gradient descent. \citeauthor{g2018} \yrcite{g2018} exhibited situations where existing Bayesian deep networks are close to Gaussian Processes. 

So based on earlier work, we could expect that our theory about efficient exploration and making use of demonstrations in RL could extend to Bayesian deep networks. Related work had been done by \citeauthor{azizzadenesheli2018} \yrcite{azizzadenesheli2018}. They proposed Bayesian Deep Q-Network  (BDQN), a practical Thompson sampling based RL Algorithm using Bayesian regression to estimate the posterior over Q-functions, and has achieved impressive results while lacks theoretical analysis. We think this paper could provide a possible theoretical justification for BDQN, meanwhile making use of demonstrations remains future work.

\section{Experiments}
\label{sec:experiment}

Our empirical experiment is done in the CartPole Task, a classic control problem in OpenAI Gym \cite{1606.01540}. The task is to push a car left or right to balance a stick on the car. In each timestep, the RL algorithm receives a 4-dimensional state, takes one of two actions (left or right), and receives a reward of 1 if the stick's deviation angle from vertical line is within a range. If not, the episode will end. The maximum length of an episode is 200 steps, and we could view the steps after failure as reward = 0, therefore making it a fixed length task. 

Firstly, we compare the performance of GPPSTD algorithm, GPTD using $\epsilon$-greedy and deep-q learning using $\epsilon$-greedy on CartPole in Fig. \ref{fig:nopretrain}. We choose squared exponential kernel $k(x_i, x_j) = c \times exp(-\frac{1}{2}{d(x_i/l, x_j/l)}^2)$ for GPPSTD and GPTD method, with length scale $l = [0.1, 0.02, 0.1, 0.02, 0.001]$ and variance $c = 10$. Since we regard state-action pair as $x$ in GP, our length scale is a 5-dimensional vector. We note that because we believe there are no value correlations in action, we give it an length scale of 0.001, which in turn will cause $k(x_i, x_j) = 0$ when action is different. Result Fig. \ref{fig:nopretrain} shows that GPPSTD significantly outperform other two algorithms. It demonstrates GPPSTD's exploration process to be both efficient and robust, since $\epsilon$-greedy methods fluctuate a lot relative to GPPSTD. We also see that GP may be a better model than neural network in this task. 

In the second experiment, we show that when combined with demonstration, GPPSTD could achieve an even better results. In the optimal demonstration pretrain setting we use 10 episodes of optimal demonstration (200-score episodes) while in the various-pretrain setting, 5 episodes of optimal demonstration and 5 episodes of unsuccessful demonstration (score between 10-60) are used for pretrain. As shown in Fig. \ref{fig:pretrain}), various-pretrain outperforms optimal-pretrain and no-pretrain. We notice that optimal-pretrain suffers fluctuate performance compared to various-pretrain, which verifies our belief. It is because that optimal demonstration only can not provide agent with the information outside optimal trajectory, which leads to higher variance of estimations, whereas various demonstration has lower variance of estimation during exploration, thus lead to better regret as our analysis in Section \ref{bg}. Moreover, as in Fig. \ref{fig:pretrain}, various-pretrain has the lowest action uncertainty (measured by posterior variance) at the beginning, reflected our analysis on expected uncertainty analysis in Section \ref{sec:expectation_variance}. 

\section{Conclusions}

In this paper, we discuss how to make use of various demonstrations to improve exploration efficiency in RL and make a statistical proof from the view of GP. What is equally important is that we propose a new algorithm GPPSTD, which implements a model-free method in continuous space with efficient exploration by posterior sampling under GP hypothesis, and also behaves impressively in practice. Both two methods aim at efficient exploration in RL. More impressively, combining both could further improve the efficiency from a Bayesian view. The property of Gaussian Process has been discussed to extend these methods to neural network, and we expect faster computation and even better results using our model-free posterior sampling methods on Bayesian Neural Network.

\bibliography{example_paper}
\bibliographystyle{icml2018}

\end{document}